\documentclass[]{article}

\usepackage[parfill]{parskip} 
\usepackage{courier} 

\usepackage{graphicx}
\usepackage{subcaption}
\usepackage[space]{grffile} 

\usepackage[boxruled,linesnumbered,vlined,inoutnumbered]{algorithm2e}

\usepackage{lipsum}

\usepackage[round]{natbib}

\usepackage[breaklinks]{hyperref}

\usepackage{xcolor}
\definecolor{dark-red}{rgb}{0.4,0.15,0.15}
\definecolor{dark-blue}{rgb}{0,0,0.7}
\hypersetup{
    colorlinks, linkcolor={dark-blue},
    citecolor={dark-blue}, urlcolor={dark-blue}
}

\newcommand{\dvf}{V^{\pi_\theta}_\gamma}

\newcommand{\ddf}{d^{\pi_\theta}_\gamma}

\usepackage{amsmath}
\usepackage{amsthm}
\usepackage{amssymb}
\usepackage{mathtools}
\usepackage{mathrsfs}

\usepackage{bm}

\newtheorem{thm}{Theorem}[]

\newtheorem{lemma}{Lemma}[]

\newtheorem{ass}{Assumption}[]
\newtheorem{cor}{Corollary}[]
\usepackage{theoremref} 


\usepackage{amsmath,amsfonts,bm}









\def\eqref#1{equation~\ref{#1}}










\def\1{\bm{1}}










\DeclareMathAlphabet{\mathsfit}{\encodingdefault}{\sfdefault}{m}{sl}
\SetMathAlphabet{\mathsfit}{bold}{\encodingdefault}{\sfdefault}{bx}{n}











\newcommand{\E}{\mathbb{E}}



\title{On the Convergence of\\ Discounted Policy Gradient Methods}
\author{Chris Nota \\ \small Autonomous Learning Laboratory \\ \small University of Massachusetts Amherst \\ \small cnota@cs.umass.edu}
\date{}

\begin{document}

\maketitle

\begin{abstract}

Many popular policy gradient methods for reinforcement learning follow a biased approximation of the policy gradient known as the discounted approximation.
While it has been shown that the discounted approximation of the policy gradient is not the gradient of any objective function, little else is known about its convergence behavior or properties.
In this paper, we show that if the discounted approximation is followed such that the discount factor is increased slowly at a rate related to a decreasing learning rate, the resulting method recovers the standard guarantees of gradient ascent on the undiscounted objective.
\end{abstract}

\section{Introduction}
\emph{Policy gradient methods} are a class of \emph{reinforcement learning} (RL) algorithms that attempt to directly maximize the expected performance of an agent's \emph{policy} by following the gradient of an objective function \citep{sutton2000policy}, typically the expected sum of rewards, using a stochastic estimator generated by interacting with the environment.
Unbiased estimators of this gradient can suffer from high variance due to high variance in the sum of future rewards.
A common approach is to instead consider an exponentially \emph{discounted} sum of future rewards.
This approach reduces the variance of most estimators but introduces bias \citep{thomas2014bias}.
Frequently, the discounted sum of future rewards is estimated by a \emph{critic} \citep{konda2000actor}.
It has been argued that when a critic is used, discounting has the additional benefit of reducing \emph{approximation error} \citep{zhang2020deeper}.

The ``discounted'' policy gradient was originally introduced as the gradient of a discounted objective \citep{sutton2000policy}.
However, it has been shown that the gradient of the discounted objective does not produce the update direction followed by most discounted policy gradient algorithms \citep{thomas2014bias, nota2019policy}.
Instead, most algorithms follow a direction sometimes called the ``discounted approximation'' of the policy gradient.\footnote{The earliest use of this terminology that we could find was by \citet{schulman2015high}.}
It has been shown that the discounted approximation is not the gradient of \emph{any} objective function \citep{nota2019policy}.
This raises the question of how exactly the discounted approximation should be interpreted, and under what circumstances following this direction leads to the optimal policy.

In this paper, we show that the discounted approximation of the policy gradient can be viewed as a biased approximation of the undiscounted objective.
We show that the bias can be computed in closed form and its magnitude is upper bounded by a value proportional to $(1 - \gamma)$, where $\gamma \in [0, 1]$ is the \emph{discount factor}.
We then show by applying standard results for the convergence of gradient methods with errors \citep{bertsekas2000gradient} that by slowly increasing $\gamma \to 1$ at a rate inversely proportionally to an adequately decaying step size, the resulting policy gradient method will converge to a locally optimal policy.

\section{Background}

\subsection{Notation}

In RL, the environment is typically expressed as a \emph{Markov decision process} (MDP).
An MDP is a tuple, $(\mathcal S, \mathcal A, P, R, d_0)$, 
where $\mathcal S$ is the set of possible \emph{states} of the environment, 
$\mathcal A$ is the set of \emph{actions} available to the agent, 
$P: \mathcal S \times \mathcal A \to \mathcal D(\mathcal S)$ is a \emph{transition function} that determines the probability distribution over the next state given the current state and action, 
$R: \mathcal S \times \mathcal A \times \mathcal S \to \mathcal D([-R_\text{max}, R_\text{max}])$ is the distribution over rewards given a transition, bounded by some maximum reward $R_\text{max} \in \mathbb R$, 
and $d_0: \mathcal S \to [0, 1]$ is the \emph{initial state distribution}.

An \emph{episode} begins at timestep $0$ and terminates no later than time $T$.
For each episode, an initial state, $S_0$, is sampled from $d_0$.
At each timestep $t$, the agent observes state $S_t$, selects an action $A_t$, transitions to the next state $S_{t+1} \sim P(\cdot | S_t, A_t)$, and receives a reward $R_t \sim R(\cdot | S_t, A_t, S_{t+1})$.
To simplify the mathematical treatment, a special state called the \emph{terminal absorbing state} is often defined which represents the end of the episode; the agent is ``stuck'' in this state until time $T$ and receives no rewards.
The episodic setting is the default choice for many practical applications.

Actions are selected by the agent according to a \emph{policy}, $\pi$, such that $A_t \sim \pi(\cdot | S_t)$.
$\pi_\theta$ is a \emph{parameterized policy}, such that $\theta$ is a vector of parameters which are optimized by the agent.
The \emph{objective} of the agent is to find the parameters which optimize the function $J$ given by

\begin{equation}
J(\theta) = \mathbb E\left[ \sum_{t=0}^{T-1} R_t \middle | \pi = \pi_\theta \right].
\end{equation}

Note that because $S_T$ is the terminal state, no reward is received at time $T$.
The \emph{state-value function}, $V^\pi$, gives the expected returns from starting in a particular state given a policy. The \emph{action-value function}, (or simply the ``Q-function''), $Q^\pi$, gives the expected returns from a state-action pair given a policy. They are given by:

\begin{align*}
V^\pi(S_t) = \mathbb E\left[\sum_{i=t}^{T-1} R_i \middle | S_t, \pi = \pi_\theta \right], & & Q^\pi(S_t, A_t) = \mathbb E\left[\sum_{i=t}^{T-1} R_i \middle | S_t, A_t, \pi = \pi_\theta \right].
\end{align*}

The \emph{discount factor}, $\gamma \in [0, 1]$, is scalar value that decreases the importance of future rewards relative to immediate rewards.
The \emph{discounted returns} from time $t$ are given by $\sum_{i=t}^T \gamma^{i - t} R_i$.
The \emph{discounted} value functions are then given by $V^\pi_\gamma$ and $Q^\pi_\gamma$.
They are defined as above, except the discounted returns are substituted for the undiscounted returns.

\subsection{Policy Gradient Methods}

\emph{Policy gradient methods} attempt to directly optimize $J$.
\cite{sutton2000policy} showed that the gradient of $J$ can be written in terms of $Q^{\pi_\theta}$:

\begin{equation}
    \nabla J(\theta) = \E \left [ \sum_{t=0}^{T-1} Q^{\pi_\theta}(S_t, A_t) \frac{\partial}{\partial \theta} \ln \pi_\theta (S_t, A_t)\middle | \pi = \pi_\theta\right].
\end{equation}

We assume that policy is Lipschitz continuous in that there exists some constant $L_\pi$ such that for all $s \in \mathcal S$ and $a \in \mathcal A,$ $\frac{\partial}{\partial \theta} \ln \pi_\theta(a | s) \leq L_\pi$.\footnote{While existing convergence proofs for policy gradient methods depend on this property \citep{wang2019neural}, it is rarely satisfied by standard neural network architectures. A remedy for this theory--practice gap is outside the scope of this paper. 
}
The \emph{discounted approximation} of the policy gradient substitutes the discounted action-value function into the expression above.
We define this approximation as

\begin{equation}
\hat \nabla (\theta, \gamma) \coloneqq \E \left [ \sum_{t=0}^{T-1} Q^{\pi_\theta}_\gamma(S_t, A_t) \frac{\partial}{\partial \theta} \ln \pi_\theta (S_t, A_t)\middle | \pi = \pi_\theta\right].
\end{equation}

This approximation has often been mistaken for the gradient of the discounted objective.\footnote{An incomplete review of incorrect uses of the discounted approximation in deep reinforcement learning was given by \citet{nota2019policy}.}
It has been shown that not only is this assumption incorrect \citep{thomas2014bias}, but the approximation is not the gradient of \emph{any} objective \citep{nota2019policy}. The correct gradient of the discounted objective was given by \citet{sutton2000policy}.

Prior work has argued that the discounted approximation is ``missing'' an extra $\gamma^t$ term that is found in the gradient of the discounted objective \citep{thomas2014bias}.
However, the discounted objective itself is not widely used and is sometimes considered ``deprecated'' \citep{sutton2018reinforcement} because it rarely reflects the true goals of practitioners in the episodic setting and is not well-defined in the continuing setting \citep{naik2019discounted}.

Therefore, in this paper we view the discounted approximation as a biased approximation of the undiscounted objective.
The use of the discounted approximation has traditionally been understood in terms of a bias-variance trade-off \citep{sutton2018reinforcement}, but \citet{zhang2020deeper} argued that the discounted approximation also helps combat the bias caused by approximating $Q^{\pi_\theta}$, resulting in a 3-way trade-off between bias, variance, and representation error. 

\subsection{The Convergence of Gradient Methods with Errors}

\citet{bertsekas2000gradient} provided several proofs extending and strengthening the convergence properties of gradient descent across a range of settings. In particular, they considered the convergence properties of sequences of the form\footnote{\citet{bertsekas2000gradient} use $t$ for the index of the sequence and $\gamma$ for the step size. We instead use $i$ for the index and $\alpha$ for the step size as $t$ and $\gamma$ are already used elsewhere.}

\begin{equation}
    x_{i+1} = x_i + \alpha_i(s_i + w_i),
\end{equation}

where $x_i$ is a parameter vector, $\alpha_i$ is a step size, $s_i$ is a descent direction for some objective function $f(x_i)$, and $w_i$ is a vector of errors.
For example, direct gradient ascent on $J(\theta)$ would be described by the sequence

\begin{equation}
    \theta_{i+1} = \theta_{i} + \alpha_i \nabla J(\theta),
\end{equation}

where $s_i = \nabla J(\theta)$ and $w_i$ is the zero vector.
The convergence results apply equally to ascent and descent directions; we will give the results in terms of ascent directions as policy gradient methods are typically described in terms of ascending $J$.
The proofs given by \citet{bertsekas2000gradient} require several assumptions, which we give below.

\begin{ass}
    \label{ass:lipschitz}
    $f: \mathbb R^n \to \mathbb R$ is a continuously differentiable scalar function on $\mathbb R^n$ such that for some constant $L$ we have
    \begin{equation}
    \forall x, \bar x \in \mathbb R^n: \Vert \nabla f(x) - \nabla f(\bar x) \Vert \leq L \Vert x - \bar x \Vert.
    \end{equation}
\end{ass}

\begin{ass}
    \label{ass:step-size}
    The step size $\alpha_i$ is positive and satisfies
    \begin{align}
        \sum_{i=0}^\infty \alpha_i = \infty, & & \sum_{i=0}^\infty \alpha_i^2 < \infty.
    \end{align}
\end{ass}

\begin{ass}
    \label{ass:ascent}
    $s_i$ is an ascent direction satisfying for some positive scalars $c_1$ and $c_2$:
    \begin{align}
        c_1 \Vert \nabla f(x_i) \Vert^2 \leq \nabla f(x_i) \cdot s_i, & & \Vert s_i \Vert \leq c_2 \Vert \nabla f(x_i) \Vert.
    \end{align}
\end{ass}

\begin{ass}
    \label{ass:error}
    $w_i$ is an error vector satisfying for some positive scalars $p$ and $q$:
    \begin{align}
        \Vert w_i \Vert \leq \alpha_i \big(p + q \Vert \nabla f(x_i) \Vert \big).
    \end{align}
\end{ass}

Assumption \ref{ass:error} will be the most interesting to us, constraining the magnitude of the error vector $w_i$.
Because this magnitude is proportional to $\alpha_i$, in the limit $w_i$ must decay to $0$.
Given the above assumptions, we have:

\begin{thm}
    \label{thm:convergence}
    Let $x_i$ be the sequence generated by the method
    \begin{equation}
        x_{i+1} = x_i + \alpha_t(s_i + w_i),
    \end{equation}
    satisfying Assumptions \ref{ass:lipschitz}-\ref{ass:error}. Then either $f(x_i) \to \infty$ or else $f(x_i)$ converges to a finite value and $\lim_{i \to \infty} \nabla f(x_i) = 0$. Furthermore, every limit point of $x_i$ is a stationary point of $f$. 
\end{thm}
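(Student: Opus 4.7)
The plan is to follow the classical descent-lemma template, treating $w_i$ as a perturbation that is summable-squared thanks to Assumption \ref{ass:error}. The workhorse is the $L$-smoothness inequality implied by Assumption \ref{ass:lipschitz}, namely
\[
f(x_{i+1}) \geq f(x_i) + \nabla f(x_i)^\top (x_{i+1} - x_i) - \tfrac{L}{2}\Vert x_{i+1} - x_i \Vert^2.
\]
Substituting the update rule $x_{i+1} - x_i = \alpha_i(s_i + w_i)$ splits the right-hand side into a principal ascent term $\alpha_i \nabla f(x_i)^\top s_i$, a cross term $\alpha_i \nabla f(x_i)^\top w_i$, and the quadratic remainder $\tfrac{L \alpha_i^2}{2}\Vert s_i + w_i \Vert^2$.

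First I would bound each piece using Assumptions \ref{ass:ascent} and \ref{ass:error}. The ascent term is at least $c_1 \alpha_i \Vert \nabla f(x_i) \Vert^2$. Cauchy--Schwarz combined with the error bound gives $|\alpha_i \nabla f(x_i)^\top w_i| \leq \alpha_i^2 \bigl(p \Vert \nabla f(x_i) \Vert + q \Vert \nabla f(x_i) \Vert^2\bigr)$, and $\Vert s_i + w_i \Vert^2 \leq 2(c_2^2 \Vert \nabla f(x_i)\Vert^2 + \alpha_i^2(p + q\Vert \nabla f(x_i)\Vert)^2)$. Applying AM--GM to absorb the linear piece $p \Vert \nabla f(x_i) \Vert$ into $\Vert \nabla f(x_i) \Vert^2$ plus a constant, the net inequality becomes
\[
f(x_{i+1}) \geq f(x_i) + \alpha_i \Vert \nabla f(x_i) \Vert^2 \bigl(c_1 - A \alpha_i\bigr) - B \alpha_i^2,
\]
for some constants $A, B > 0$ depending only on $L, c_1, c_2, p, q$. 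Since $\alpha_i \to 0$ by Assumption \ref{ass:step-size}, there is an index $i_0$ beyond which $c_1 - A\alpha_i \geq c_1/2$.

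Next I would telescope this inequality from $i_0$ to $N$, obtaining
\[
f(x_{N+1}) \geq f(x_{i_0}) + \tfrac{c_1}{2} \sum_{i=i_0}^{N} \alpha_i \Vert \nabla f(x_i) \Vert^2 - B \sum_{i=i_0}^{N} \alpha_i^2.
\]
The last sum converges by Assumption \ref{ass:step-size}, yielding the dichotomy of the theorem: either $f(x_i) \to \infty$, or $f(x_i)$ stays bounded above and $\sum_i \alpha_i \Vert \nabla f(x_i) \Vert^2 < \infty$. In the bounded case, the corresponding telescoped upper bound (obtained by flipping Cauchy--Schwarz) shows $\{f(x_i)\}$ differs from a monotone sequence by a summable quantity, so $f(x_i)$ converges to a finite limit.

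Finally, summability of $\alpha_i \Vert \nabla f(x_i) \Vert^2$ together with $\sum \alpha_i = \infty$ immediately gives $\liminf_i \Vert \nabla f(x_i) \Vert = 0$. I expect the subtle step, and the main obstacle, to be upgrading this to a true limit. The standard contradiction argument assumes $\limsup_i \Vert \nabla f(x_i) \Vert \geq 2\varepsilon > 0$ and uses the $L$-Lipschitzness of $\nabla f$ together with the update bound $\Vert x_{i+1} - x_i \Vert \leq \alpha_i(c_2 \Vert \nabla f(x_i) \Vert + \alpha_i(p + q\Vert \nabla f(x_i)\Vert))$ to show that any excursion of $\Vert \nabla f(x_i) \Vert$ from below $\varepsilon$ back up to $2\varepsilon$ must span an index interval whose cumulative step-size mass $\sum \alpha_i$ is bounded below by a positive constant. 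Since infinitely many such excursions would force $\sum \alpha_i \Vert \nabla f(x_i) \Vert^2$ to exceed any bound, we reach a contradiction, concluding $\Vert \nabla f(x_i) \Vert \to 0$. Continuity of $\nabla f$ then ensures every limit point of $\{x_i\}$ is stationary.
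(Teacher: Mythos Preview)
The paper does not actually prove this theorem; it is quoted in the background section as a known result from \citet{bertsekas2000gradient} and then invoked as a black box in the proof of the paper's main result (Theorem~2). Your proposal, by contrast, sketches the standard descent-lemma argument that underlies the cited reference: smoothness inequality, telescoping to get $\sum_i \alpha_i\Vert\nabla f(x_i)\Vert^2<\infty$, a quasi-monotonicity argument for convergence of $f(x_i)$, and the Lipschitz excursion argument to upgrade $\liminf$ to $\lim$. That outline is correct and is essentially how Bertsekas proves it, so in effect you are reproducing the reference rather than diverging from the paper. One small point worth tightening: in the excursion step you should make explicit that $n$ is the \emph{first} index at which $\Vert\nabla f(x_i)\Vert$ exceeds $2\varepsilon$, so that $\Vert\nabla f(x_i)\Vert\le 2\varepsilon$ throughout the intermediate indices; this upper bound is what lets you control $\Vert s_i+w_i\Vert$ uniformly and conclude that the cumulative step size $\sum\alpha_i$ over the excursion is bounded below.
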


The goal in this paper will be to decompose the discounted approximation of the policy gradient into an ascent direction and an error term, allowing us to apply Theorem \ref{thm:convergence}.
We will then show that Assumption \ref{ass:error} can be satisfied by appropriately decaying $(1 - \gamma)$.

\section{Bias in the Discounted Approximation}

In this section, we will show that the discounted approximation can be viewed as a biased approximation of the true policy gradient, and that the bias has an upper bound proportional to $(1 - \gamma)$.
\citet{nota2019policy} showed that the discounted approximation can be written as

\begin{equation}
\label{eq:discounted-approximation}
\hat \nabla (\theta, \gamma) = \sum_{s \in S} \ddf(s) \frac{\partial}{\partial \theta} \dvf (s),
\end{equation}

where

\begin{equation}
\ddf(s) \coloneqq d_0(s) + (1 - \gamma) \sum_{t=1}^{T-1} \Pr(S_t = s | \pi = \pi_\theta).
\end{equation}

We begin by relating this form to the gradient of the undiscounted objective, allowing us to better compare the two.
In other words, we will show how the \emph{undiscounted} objective, $J(\theta)$, can be written in terms of the \emph{discounted} value function.

\begin{lemma}
\label{lem:decomp}
For all $\gamma \in [0, 1]$:
\begin{equation}
     J(\theta) = \sum_{s \in \mathcal S} \ddf(s) \dvf(s).
\end{equation}
\end{lemma}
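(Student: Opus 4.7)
The plan is to expand the right-hand side using the definitions of $\ddf$ and $\dvf$, and show that an exchange-of-summation argument plus a single geometric-series identity collapses it to $J(\theta)$.

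First I would substitute the definition of $\ddf(s)$ into the sum and split off the term arising from $d_0$. Writing each $s$-sum as an expectation over the state at the appropriate timestep, this gives
\begin{equation*}
\sum_{s \in \mathcal S} \ddf(s)\,\dvf(s) \;=\; \E\bigl[\dvf(S_0)\bigr] \;+\; (1-\gamma)\sum_{t=1}^{T-1} \E\bigl[\dvf(S_t)\bigr].
\end{equation*}

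Next I would unfold the discounted value function by linearity of expectation and the tower property: $\E[\dvf(S_t)] = \sum_{i=t}^{T-1} \gamma^{i-t}\,\E[R_i]$ for every $t$. Substituting this into the expression above yields
\begin{equation*}
\sum_{i=0}^{T-1} \gamma^i\,\E[R_i] \;+\; (1-\gamma)\sum_{t=1}^{T-1}\sum_{i=t}^{T-1} \gamma^{i-t}\,\E[R_i].
\end{equation*}

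The key manipulation is to swap the order of summation in the double sum, regrouping by $i$: $\sum_{t=1}^{T-1}\sum_{i=t}^{T-1} \gamma^{i-t}\,\E[R_i] = \sum_{i=1}^{T-1}\E[R_i]\sum_{k=0}^{i-1}\gamma^k$. The geometric identity $(1-\gamma)\sum_{k=0}^{i-1}\gamma^k = 1-\gamma^i$ (used in this product form so that no division by $1-\gamma$ is required, keeping the argument uniformly valid on the closed interval $[0,1]$) then turns the second summand into $\sum_{i=1}^{T-1}(1-\gamma^i)\,\E[R_i]$. The $\gamma^i\,\E[R_i]$ contributions for $i \geq 1$ cancel between the two summands, leaving $\E[R_0] + \sum_{i=1}^{T-1}\E[R_i] = J(\theta)$.

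There is no serious obstacle here beyond careful index bookkeeping; the argument is entirely algebraic once the value function is unfolded. As a sanity check, at $\gamma=1$ the formula reduces to $J(\theta)=\E[V^{\pi_\theta}(S_0)]$, and at $\gamma=0$ the prefactor $(1-\gamma)$ makes $\ddf(s)$ into the unnormalized state-visitation frequency while $\dvf(s)$ collapses to the expected immediate reward, giving $J(\theta) = \sum_{t=0}^{T-1}\E[R_t]$; both endpoints match the general identity we derive.
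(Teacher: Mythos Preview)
Your argument is correct, but it takes a different route from the paper. The paper starts from $J(\theta)=\sum_{t=0}^{T-1}\E[R_t]$ and invokes the one-step Bellman identity $\E[R_t]=\E[\dvf(S_t)-\gamma\,\dvf(S_{t+1})]$; the resulting sum telescopes, leaving $\E[\dvf(S_0)]+(1-\gamma)\sum_{t=1}^{T-1}\E[\dvf(S_t)]$ after using $\dvf(S_T)=0$, which is then rewritten as $\sum_s \ddf(s)\dvf(s)$. You instead start from the right-hand side, expand $\dvf(S_t)$ all the way down to rewards, swap the order of summation, and collapse the inner sum with the finite geometric identity $(1-\gamma)\sum_{k=0}^{i-1}\gamma^k=1-\gamma^i$. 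The paper's telescoping proof is shorter and more structural, using the Bellman equation as the only ingredient; your expansion is more computational but entirely self-contained and, as you note, avoids any division by $1-\gamma$, so it is uniformly valid on $[0,1]$ without a separate $\gamma=1$ check. Both approaches are equally valid here.
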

\begin{proof}
Consider that for any $\gamma$ and $t$, we can rearrange the Bellman equation \citep{sutton2018reinforcement}: $\mathbb E[R_t | \pi = \pi_\theta] = \mathbb E[\dvf(S_t) - \gamma \dvf(S_{t+1}) | \pi = \pi_\theta]$.
This allows us to rewrite the objective:

\begin{align*}
    \forall \gamma: &\mathbb E \left[\sum_{t=0}^{T-1} R_t \bigg| \pi = \pi_\theta \right] \\
    =& \mathbb E\left[\sum_{t=0}^{T-1} \left(\dvf(S_t) - \gamma \dvf(S_{t+1}) \right) \bigg| \pi = \pi_\theta \right] \\
    =& \mathbb E \left[\dvf(S_0) + \sum_{t=1}^{T-1} \left(\dvf(S_t) - \gamma \dvf(S_t)\right) - \underbrace{\gamma \dvf(S_T)}_{0} \bigg | \pi = \pi_\theta\right] \\
    =& \mathbb E \left[\dvf(S_0) + (1 - \gamma) \sum_{t=1}^{T-1}  \dvf(S_t)  \bigg | \pi = \pi_\theta \right] \\
    =& \sum_{s \in \mathcal S} \left(\Pr(S_0 = s) \dvf(s) +  (1 - \gamma) \sum_{t = 1}^{T-1}\Pr(S_t = s | \pi = \pi_\theta) \dvf(s) \right)  \\
    =& \sum_{s \in S} d^\theta_\gamma(s) \dvf(s).
\end{align*}
\end{proof}

Immediately, by differentiating the above expression we have:

\begin{cor}
\label{cor:grad}
For all $\gamma \in [0, 1]$:

\begin{equation}
\nabla J(\theta) = \sum_{s \in \mathcal S} \ddf(s) \frac{\partial}{\partial \theta} \dvf(s) + \sum_{s \in \mathcal S} \dvf(s) \frac{\partial}{\partial \theta}  \ddf(s).
\end{equation}
\end{cor}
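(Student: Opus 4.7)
The plan is to derive Corollary \ref{cor:grad} as a direct consequence of Lemma \ref{lem:decomp} by differentiating both sides with respect to $\theta$ and applying the product rule term-by-term. Lemma \ref{lem:decomp} already gives the clean factorization $J(\theta) = \sum_{s \in \mathcal S} \ddf(s)\dvf(s)$, so all that is required is to turn $\nabla$ on both sides and push it through the (finite) sum over states.

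Concretely, I would start from
\begin{equation*}
\nabla J(\theta) = \nabla \sum_{s \in \mathcal S} \ddf(s)\, \dvf(s),
\end{equation*}
interchange the gradient with the sum over $s$ (justified by the finiteness of the state space, or more generally by uniform convergence arguments in the bounded-reward, Lipschitz-policy setting assumed earlier), and then apply the product rule inside the summand to get
\begin{equation*}
\frac{\partial}{\partial \theta}\bigl[\ddf(s)\,\dvf(s)\bigr] = \ddf(s)\frac{\partial}{\partial \theta}\dvf(s) + \dvf(s)\frac{\partial}{\partial \theta}\ddf(s).
\end{equation*}
Summing over $s$ gives exactly the stated decomposition.

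Because this is a one-line consequence of the lemma, there is no real obstacle in the calculation itself; the only subtlety worth flagging is the legitimacy of swapping $\nabla$ with $\sum_{s\in\mathcal S}$ and with the expectation implicit in $\ddf$. In the episodic setting with a finite (or at most countable) state space, horizon $T$, and the Lipschitz-continuous policy assumption introduced in Section~2.2, both $\dvf(s)$ and $\ddf(s)$ are smooth in $\theta$ with uniformly bounded derivatives, so the interchange is valid and the proof reduces to a single application of the product rule.
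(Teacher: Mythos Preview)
Your proposal is correct and matches the paper's own proof exactly: the paper simply states that the corollary follows immediately from Lemma~\ref{lem:decomp} and the product rule. Your additional remarks about justifying the interchange of $\nabla$ with the finite sum are a welcome elaboration but do not depart from the paper's approach.
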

\begin{proof}
This follows immediately from Lemma \ref{lem:decomp} and the product rule.
\end{proof}

By combining Equation \ref{eq:discounted-approximation} and Corollary \ref{cor:grad}, we can write $\hat \nabla(\theta, \gamma)$ as the sum of an ascent direction on the undiscounted objective, i.e., $\nabla J(\theta)$, and an error vector:

\begin{equation}
    \label{eq:bias}
    \hat \nabla(\theta, \gamma) = \underbrace{\nabla J(\theta)}_\text{ascent direction} - \underbrace{\sum_{s \in \mathcal S} \dvf(s) \frac{\partial}{\partial \theta}  \ddf(s)}_\text{error vector}.
\end{equation}

We can easily see that the error vector contains a coefficient of $(1 - \gamma)$:

\begin{equation}
\sum_{s \in \mathcal S} \dvf(s) \frac{\partial}{\partial \theta} \ddf(s) = (1 - \gamma) \sum_{s \in \mathcal S} \dvf(s) \frac{\partial}{\partial \theta} \sum_{t=1}^{T-1} \Pr(S_t = s | \pi = \pi_\theta).
\end{equation}

Notice that $d_0(s)$ is dropped from the above expression because $\frac{\partial}{\partial \theta} d_0(s) = 0$.
Due to the coefficient $(1-\gamma)$, if we show that the multiplicand is bounded, then the entire expression is bounded by a quantity proportional to $(1 - \gamma)$. By carefully choosing a sequence of discount factors, $\gamma_i$, we can then satisfy Assumption \ref{ass:error}.
We begin with a helpful lemma:

\begin{lemma}
For all $t$, there exists some finite Lipschitz constant $L_t$ such that for all $s \in \mathcal S$:
\begin{align}
\left \Vert \frac{\partial}{\partial \theta} \Pr(S_t = s | \pi = \pi_\theta) \right \Vert &\leq L_t.
\end{align}
\end{lemma}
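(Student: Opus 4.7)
The plan is to express $\Pr(S_t = s \mid \pi = \pi_\theta)$ as an expectation over finite-length trajectories, differentiate using the standard log-derivative (REINFORCE) identity, and then apply the assumed uniform bound $\|\partial_\theta \ln \pi_\theta(a\mid s)\| \le L_\pi$ term by term. My expected answer is $L_t = t L_\pi$.

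First I would unroll the probability over $t$-step trajectories:
\begin{equation*}
    \Pr(S_t = s \mid \pi_\theta) = \sum_{s_0, a_0, \ldots, s_{t-1}, a_{t-1}, s_t} d_0(s_0)\,\mathbb{1}\{s_t = s\} \prod_{i=0}^{t-1} \pi_\theta(a_i \mid s_i)\,P(s_{i+1} \mid s_i, a_i),
\end{equation*}
where sums over continuous state/action sets should be interpreted as integrals. Only the $\pi_\theta$ factors depend on $\theta$, so applying the product rule together with $\partial_\theta \pi_\theta(a|s) = \pi_\theta(a|s)\,\partial_\theta \ln \pi_\theta(a|s)$, and pulling the product back into trajectory notation, gives the REINFORCE-style expression
\begin{equation*}
    \frac{\partial}{\partial \theta} \Pr(S_t = s \mid \pi_\theta) = \mathbb{E}\!\left[\mathbb{1}\{S_t = s\} \sum_{i=0}^{t-1} \frac{\partial}{\partial \theta} \ln \pi_\theta(A_i \mid S_i) \,\bigg|\, \pi = \pi_\theta\right].
\end{equation*}

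Finally I would apply Jensen's inequality to move the norm inside the expectation, the triangle inequality to move it inside the inner sum, and then the Lipschitz hypothesis on $\ln \pi_\theta$ to bound each of the $t$ summands by $L_\pi$. Since the indicator is at most $1$, this yields
\begin{equation*}
    \left\|\frac{\partial}{\partial \theta} \Pr(S_t = s \mid \pi_\theta)\right\| \le \mathbb{E}\!\left[\mathbb{1}\{S_t = s\}\right] \cdot t L_\pi \le t L_\pi,
\end{equation*}
so $L_t = t L_\pi$ works, which is finite for every finite $t$.

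The only potential obstacle is the interchange of differentiation and expectation needed in the second step. For discrete $\gS$ and $\gA$ it is immediate because the sum defining $\Pr(S_t = s \mid \pi_\theta)$ is a finite product of smooth factors; in the general case it follows from dominated convergence, using that each $\pi_\theta(\cdot \mid s)$ is a probability density and that $\partial_\theta \ln \pi_\theta$ is uniformly bounded by $L_\pi$, so the integrand and its $\theta$-derivative are both dominated by an integrable function. This is standard and introduces no new assumptions.
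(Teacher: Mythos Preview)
Your argument is correct, and it is genuinely different from the paper's. The paper proceeds by induction on $t$: assuming a bound $L_{t-1}$ on $\partial_\theta \Pr(S_{t-1}=s\mid\pi_\theta)$, it expands $\Pr(S_t=s_t\mid\pi_\theta)$ one step via the transition kernel, applies the product rule, and crudely drops all probability factors to obtain a coordinate-wise recursion of the form $L_t \le |\mathcal S||\mathcal A|(L_{t-1}+L_\pi)$, which blows up geometrically in $t$ but is enough to assert finiteness. You instead unroll the whole trajectory at once and use the log-derivative identity to write the gradient as $\mathbb E[\mathbb 1\{S_t=s\}\sum_{i<t}\partial_\theta\ln\pi_\theta(A_i\mid S_i)]$, then bound the norm directly. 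Your route gives the explicit and much tighter constant $L_t=tL_\pi$, works without ever invoking $|\mathcal S|$ or $|\mathcal A|$ (so it extends cleanly to continuous spaces under the dominated-convergence justification you mention), and is shorter. The paper's inductive approach has the minor advantage of staying entirely within elementary one-step manipulations and not requiring the reader to recognize the REINFORCE trick, but at the cost of a far looser bound. One small presentational point: in your final display the factorization $\mathbb E[\mathbb 1\{S_t=s\}]\cdot tL_\pi$ is correct because the indicator is a nonnegative scalar and $\|\sum_i\partial_\theta\ln\pi_\theta(A_i\mid S_i)\|\le tL_\pi$ holds pointwise, but it would read more smoothly to display the intermediate step $\mathbb E\big[\mathbb 1\{S_t=s\}\,\|\sum_i\partial_\theta\ln\pi_\theta(A_i\mid S_i)\|\big]$ before pulling the bound out.
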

\begin{proof}
Assume that for a given timestep $t$, for all $s$: $\left \Vert \frac{\partial}{\partial \theta} \Pr(S_{t-1} = s | \pi = \pi_\theta) \right \Vert \leq L_{t-1}$ for some positive constant $L_{t-1}$.
We will try to show that the constant $L_t$ therefore exists.
Because we only need to show that the constant exists, we do not need to worry about the tightness of the bound; even a very loose bound is sufficient.
Consider the $i$th parameter $\theta_i$. For all $s_t$:

\begin{align*}
&\frac{\partial}{\partial \theta_i} \Pr(S_t = s_t | \pi = \pi_\theta) \\ 
=& \frac{\partial}{\partial \theta_i} \sum_{s_{t-1} \in \mathcal S} \sum_{a_{t-1} \in \mathcal A}  \Pr(S_{t-1} = s_{t-1} | \pi = \pi_\theta) \pi_\theta(a_{t-1} | s_{t-1}) P(s_t | s_{t-1}, a_{t-1}) \\
=& \sum_{s_{t-1} \in \mathcal S} \sum_{a_{t-1} \in \mathcal A}  \Big(  \pi_\theta(a_{t-1} | s_{t-1}) P(s_t | s_{t-1}, a_{t-1}) \frac{\partial}{\partial \theta_i} \Pr(S_{t-1} = s_{t-1} | \pi = \pi_\theta) \Big)\\
 &+ \sum_{s_{t-1} \in \mathcal S} \sum_{a_{t-1} \in \mathcal A} \Big(\Pr(S_{t-1} = s_{t-1} | \pi = \pi_\theta) P(s_t | s_{t-1}, a_{t-1}) \frac{\partial}{\partial \theta_i}  \pi_\theta(a_{t-1} | s_{t-1}) \Big) \\
 \leq & \sum_{s_{t-1} \in \mathcal S} \sum_{a_{t-1} \in \mathcal A}\frac{\partial}{\partial \theta_i} \Pr(S_{t-1} = s_{t-1} | \pi = \pi_\theta) + \sum_{s_{t-1} \in \mathcal S} \sum_{a_{t-1}} \frac{\partial}{\partial \theta_i}  \pi_\theta(a_{t-1} | s_{t-1}) \\
 \leq& \vert \mathcal S \vert \vert \mathcal A \vert L_{t-1} + \vert \mathcal S \vert \vert \mathcal A \vert L_\pi.
\end{align*}

If the partial derivative with respect to each parameter is bounded, then the partial derivative as a whole is bounded.
Notice that for all $s_0$: $$\left \Vert \frac{\partial}{\partial \theta} \Pr(S_{0} = s_0 | \pi = \pi_\theta) \right \Vert = 0.$$ Therefore, by induction $L_t$ exists and is finite for all $t \leq T$.
\end{proof}

The bound given above is crude, but nevertheless, the existence of a constant bound $L_t$ at each timestep $t$ is sufficient for our purposes.
We then have:

\begin{cor}
\label{cor:bounded}
There exists some finite Lipschitz constant $L_d$ for the undiscounted state distribution ``$d$'' such that for all $s \in \mathcal S$:
\begin{align}
\left \Vert \frac{\partial}{\partial \theta} \sum_{t=1}^{T-1} \Pr(S_t = s | \pi = \pi_\theta) \right \Vert &\leq L_d.
\end{align}
\end{cor}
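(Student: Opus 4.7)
The plan is to reduce the corollary to the preceding lemma by pulling the derivative through the finite sum over $t$ and then applying the triangle inequality. Concretely, I would start by noting that differentiation is linear and the sum is a finite sum (since the episode length $T$ is finite by assumption), so
\begin{equation*}
\frac{\partial}{\partial \theta} \sum_{t=1}^{T-1} \Pr(S_t = s \mid \pi = \pi_\theta) = \sum_{t=1}^{T-1} \frac{\partial}{\partial \theta} \Pr(S_t = s \mid \pi = \pi_\theta).
\end{equation*}

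Next, I would apply the triangle inequality for the chosen vector norm $\Vert \cdot \Vert$ to push the norm inside the sum, yielding a bound of the form
\begin{equation*}
\left\Vert \frac{\partial}{\partial \theta} \sum_{t=1}^{T-1} \Pr(S_t = s \mid \pi = \pi_\theta) \right\Vert \leq \sum_{t=1}^{T-1} \left\Vert \frac{\partial}{\partial \theta} \Pr(S_t = s \mid \pi = \pi_\theta) \right\Vert.
\end{equation*}
The previous lemma supplies, for each $t$, a finite constant $L_t$ that uniformly bounds the $t$-th summand over all $s \in \mathcal S$. Substituting, I can take
\begin{equation*}
L_d \coloneqq \sum_{t=1}^{T-1} L_t,
\end{equation*}
which is a finite sum of finite constants and therefore finite, and this bound is uniform in $s$.

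There is essentially no hard step here; the result is a routine corollary of the earlier lemma once one observes that the episodic setting guarantees the sum is finite. The only subtlety worth flagging is the appeal to $T < \infty$: if one wanted to extend this to the continuing setting one would need to control the decay of $L_t$ in $t$, but in the episodic setting assumed throughout the paper, no such care is required.
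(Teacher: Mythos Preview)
Your proposal is correct and matches the paper's proof essentially line for line: interchange the derivative with the finite sum, apply the triangle inequality, and bound each term by the per-timestep constant $L_t$ from the preceding lemma, setting $L_d = \sum_{t=1}^{T-1} L_t$. The only addition is your remark about the continuing setting, which is a reasonable aside but not needed here.
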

\begin{proof}
For all $s$:
\begin{align*}
\left \Vert \frac{\partial}{\partial \theta} \sum_{t=1}^{T-1} \Pr(S_t = s | \pi = \pi_\theta) \right \Vert &\leq \sum_{t=1}^{T-1} \left \Vert \frac{\partial}{\partial \theta}  \Pr(S_t = s | \pi = \pi_\theta) \right \Vert \\
&\leq \sum_{t=1}^{T-1} L_t.
\end{align*}
\end{proof}

We now have everything we need to prove that an upper bound on the error vector that is proportional to $(1 - \gamma)$, which will allow us to derive our convergence results.

\begin{lemma}
\label{lem:error}
There exists some Lipschitz constant $L_e$ for the error vector ``$e$'' such that for all $\gamma$:
\begin{equation}
\left \Vert \sum_{s \in \mathcal S} \dvf(s) \frac{\partial}{\partial \theta}  \ddf(s) \right \Vert \leq (1 - \gamma) L_e.
\end{equation}
\end{lemma}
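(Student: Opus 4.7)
The plan is to pull out the explicit $(1-\gamma)$ factor from $\frac{\partial}{\partial \theta}\ddf(s)$, push the norm inside the sum over states via the triangle inequality, and then bound the two surviving pieces (the value function and the partial derivative of the visitation counts) by constants that do not depend on $\gamma$.

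\textbf{Step 1 (extracting $(1-\gamma)$).} Since $d_0(s)$ is independent of $\theta$, differentiating $\ddf(s) = d_0(s) + (1-\gamma)\sum_{t=1}^{T-1}\Pr(S_t = s \mid \pi = \pi_\theta)$ yields
\begin{equation*}
\frac{\partial}{\partial \theta}\ddf(s) = (1-\gamma)\frac{\partial}{\partial \theta}\sum_{t=1}^{T-1}\Pr(S_t = s \mid \pi = \pi_\theta),
\end{equation*}
so the $(1-\gamma)$ factor appears immediately as a scalar multiplier of the whole error vector.

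\textbf{Step 2 (triangle inequality and the Lipschitz bound on visitation).} Applying the triangle inequality and then Corollary \ref{cor:bounded}, I get
\begin{equation*}
\left\Vert \sum_{s \in \mathcal S} \dvf(s)\frac{\partial}{\partial \theta}\ddf(s) \right\Vert \leq (1-\gamma)\sum_{s \in \mathcal S} |\dvf(s)|\,L_d.
\end{equation*}

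\textbf{Step 3 (bounding $\dvf$ uniformly in $\gamma$).} This is the step that needs attention, because the usual infinite-horizon bound $R_\text{max}/(1-\gamma)$ would cancel the factor we just extracted. However, the episodic setting saves us: since every trajectory has at most $T$ non-terminal steps and rewards are bounded by $R_\text{max}$, we have $|\dvf(s)| \leq \sum_{i=0}^{T-1}\gamma^i R_\text{max} \leq T R_\text{max}$ uniformly in $\gamma \in [0,1]$. Combining these bounds gives
\begin{equation*}
\left\Vert \sum_{s \in \mathcal S} \dvf(s)\frac{\partial}{\partial \theta}\ddf(s)\right\Vert \leq (1-\gamma)\,|\mathcal S|\,T\,R_\text{max}\,L_d,
\end{equation*}
so setting $L_e \coloneqq |\mathcal S|\,T\,R_\text{max}\,L_d$ proves the claim.

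The main (minor) obstacle is Step 3: one has to be careful to use the horizon-based bound on $\dvf$ rather than the geometric-series bound, since the latter would re-introduce a $(1-\gamma)^{-1}$ and wipe out the contraction we need for Assumption \ref{ass:error}. The rest is just the triangle inequality together with the Lipschitz bound on the visitation derivatives that Corollary \ref{cor:bounded} already supplies.
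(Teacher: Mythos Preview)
Your proposal is correct and follows essentially the same approach as the paper's own proof: extract the $(1-\gamma)$ factor from $\frac{\partial}{\partial\theta}\ddf(s)$, bound the discounted value function by the horizon-based constant (the paper writes this as $V_{\text{max}} = \sum_{t=0}^{T} R_{\text{max}}$ rather than $T R_{\text{max}}$), and then apply Corollary~\ref{cor:bounded} together with the triangle inequality to obtain $L_e = |\mathcal S|\,V_{\text{max}}\,L_d$. Your explicit remark about why the horizon bound is needed instead of the geometric-series bound is a useful clarification that the paper leaves implicit.
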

\begin{proof}
We have:
\begin{align*}
\left \Vert \sum_{s \in \mathcal S} V^\theta_{\gamma}(s) \frac{\partial}{\partial \theta}  d^\theta_{\gamma}(s) \right \Vert
& \leq \vert \mathcal S \vert \max_{s \in \mathcal S} \left\Vert V^\theta_{\gamma}(s)  \frac{\partial}{\partial \theta}  d^\theta_{\gamma}(s) \right\Vert \\
& \leq \vert \mathcal S  \vert V_\text{max} \max_{s \in \mathcal S} \left \Vert \frac{\partial}{\partial \theta} d^\theta_{\gamma}(s) \right \Vert \\
& = \vert \mathcal S  \vert V_\text{max} \max_{s \in \mathcal S} \left \Vert \frac{\partial}{\partial \theta} (1 - \gamma) \sum_{t = 1}^{T - 1} \Pr(S_t = s | \pi = \pi_\theta) \right \Vert \\
& = (1 - \gamma) \vert \mathcal S  \vert V_\text{max} \left \Vert \frac{\partial}{\partial \theta} \sum_{t = 1}^{T - 1} \Pr(S_t = s | \pi = \pi_\theta) \right \Vert   \\
& \leq (1 - \gamma) \vert \mathcal S  \vert V_\text{max} L_d, \\
\end{align*}
where $V_\text{max} = \sum_{t=0}^T R_\text{max}$ and $L_d$ is given by Corollary \ref{cor:bounded}. 
\end{proof}

\section{Convergence of the Discounted Approximation}

We now proceed to our main result, which shows that slowly increasing $\gamma$ over time while following the discounted approximation results in a locally optimal policy under certain conditions.

\begin{thm}
Let $\theta_i$ be the sequence generated by the method

\begin{equation}
    \theta_{i+1} = \theta_i + \alpha_i \hat \nabla(\theta_i, \gamma_i).
\end{equation}

Assume that for all $i$ the step size $\alpha_i > 0$, the discount factor $\gamma_i \in [0, 1]$, and the following conditions are satisfied:

\begin{align}
    &\sum_{i=0}^\infty \alpha_i = \infty,& &\sum_{i=0}^\infty \alpha_i^2 < \infty,&
    &\alpha_i \geq c (1 - \gamma_i),&
\end{align}

where $c$ is some constant.
Then $J(\theta_i)$ converges to a finite value and $\lim_{i \to \infty} \nabla J(\theta_i) = 0$.
Furthermore, every limit point of $\theta_i$ is a stationary point of $J$.
\end{thm}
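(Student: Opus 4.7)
The plan is to cast the update rule into the framework of Theorem \ref{thm:convergence}. Using Equation \ref{eq:bias}, I would set $f = J$, $x_i = \theta_i$, $s_i = \nabla J(\theta_i)$, and
\begin{equation*}
w_i = -\sum_{s \in \mathcal S} V^{\pi_{\theta_i}}_{\gamma_i}(s) \frac{\partial}{\partial \theta} d^{\pi_{\theta_i}}_{\gamma_i}(s),
\end{equation*}
so that $\hat \nabla(\theta_i, \gamma_i) = s_i + w_i$ and the recursion matches the form $x_{i+1} = x_i + \alpha_i(s_i + w_i)$ required by the Bertsekas--Tsitsiklis result. The remaining work is to verify Assumptions \ref{ass:lipschitz}--\ref{ass:error}, conclude via Theorem \ref{thm:convergence}, and rule out the divergent branch.

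For Assumption \ref{ass:step-size}, the two hypotheses on $\alpha_i$ are given directly. For Assumption \ref{ass:ascent}, the choice $s_i = \nabla J(\theta_i)$ gives equality $\nabla J(\theta_i) \cdot s_i = \Vert \nabla J(\theta_i) \Vert^2$ and $\Vert s_i \Vert = \Vert \nabla J(\theta_i) \Vert$, so the assumption is satisfied with $c_1 = c_2 = 1$. For Assumption \ref{ass:error}, Lemma \ref{lem:error} yields $\Vert w_i \Vert \leq (1 - \gamma_i) L_e$; combining this with the coupling hypothesis $\alpha_i \geq c(1 - \gamma_i)$ gives
\begin{equation*}
\Vert w_i \Vert \leq \frac{L_e}{c}\, \alpha_i = \alpha_i\bigl(p + q \Vert \nabla J(\theta_i) \Vert\bigr)
\end{equation*}
with $p = L_e/c$ and $q = 0$, so the error assumption holds as well. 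Assumption \ref{ass:lipschitz} requires $J$ to be continuously differentiable with a Lipschitz gradient on $\mathbb R^n$; I would justify this from the Lipschitz log-policy assumption together with a bounded finite horizon and bounded rewards, which make $J$ a finite sum of smooth expectations and its Hessian uniformly bounded. (A clean statement mirrors the style of the bound for $L_t$ in the lemma preceding Corollary \ref{cor:bounded}.)

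With the four assumptions in place, Theorem \ref{thm:convergence} gives either $J(\theta_i) \to \infty$ or $J(\theta_i)$ converges with $\nabla J(\theta_i) \to 0$ and every limit point stationary. The divergent branch is ruled out because rewards are bounded in $[-R_\text{max}, R_\text{max}]$ over a horizon of length $T$, so $J$ is uniformly bounded by $T R_\text{max}$ and $J(\theta_i)$ cannot diverge to $+\infty$. This yields the three claims in the theorem.

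The main obstacle is Assumption \ref{ass:lipschitz}: the paper has only assumed Lipschitz log-policies, so establishing a Lipschitz gradient for $J$ on all of $\mathbb R^n$ requires an additional argument, likely via induction over $t$ analogous to the bound on $\Vert \partial_\theta \Pr(S_t = s \mid \pi_\theta)\Vert$, combined with the boundedness of rewards, to control the Hessian of $J$. Everything else in the proof is essentially a bookkeeping exercise of matching quantities to the Bertsekas--Tsitsiklis template; the delicate ingredient is that the coupling $\alpha_i \geq c(1 - \gamma_i)$ forces the $(1-\gamma_i)$ bias bound from Lemma \ref{lem:error} to be absorbed into the $\alpha_i$-scaled error budget, which is precisely what Assumption \ref{ass:error} demands.
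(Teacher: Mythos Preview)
Your proposal is correct and follows essentially the same route as the paper: decompose $\hat\nabla(\theta_i,\gamma_i)$ via Equation~\ref{eq:bias} into $s_i=\nabla J(\theta_i)$ plus an error $w_i$, verify Assumptions~\ref{ass:lipschitz}--\ref{ass:error}, and invoke Theorem~\ref{thm:convergence}, with the key step being that Lemma~\ref{lem:error} plus the coupling $\alpha_i\geq c(1-\gamma_i)$ yields Assumption~\ref{ass:error} with $q=0$. If anything you are slightly more careful than the paper, which does not explicitly dispose of the $J(\theta_i)\to\infty$ branch and glosses over the factor $1/c$ in the constant $p$; your treatment of both points is correct.
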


\begin{proof}
We showed in Equation \ref{eq:bias} that the discounted approximation can be written in terms of an ascent direction with respect to $J$ and an error term.
Therefore, we can apply Theorem \ref{thm:convergence} if we can show that the relevant assumptions hold.
Assumption \ref{ass:lipschitz} is satisfied by the parameterization of $\pi_\theta$ and because the MDP is finite. 
Assumption \ref{ass:step-size} is trivially satisfied by the statement of the theorem.
Assumption \ref{ass:ascent} is satisfied as $\nabla J(\theta)$ is exactly the steepest ascent direction on $J$.
Therefore, we need only to prove that Assumption \ref{ass:error} is satisfied. We have:

From Lemma \ref{lem:error}, we have at iteration $i$:

\begin{align*}
\left \Vert \sum_{s \in \mathcal S} V^\theta_{\gamma_i}(s) \frac{\partial}{\partial \theta}  d^\theta_{\gamma_i}(s) \right \Vert &\leq (1 - \gamma_i) \vert \mathcal S  \vert V_\text{max} L_d \\
&\leq \alpha_i \vert \mathcal S  \vert V_\text{max} L_d,
\end{align*}

thus satisfying Assumption \ref{ass:error} with parameters $p =  \vert \mathcal S  \vert V_\text{max} L_d$ and $q = 0$.

\end{proof}

\section{Conclusions}

We proved in Lemma \ref{lem:error} that the error in the discounted approximation is upper bounded by some finite value proportional to $(1-\gamma)$, thus providing a reasonable justification for the use of the discounted approximation, especially with high values of $\gamma$.
In Theorem \ref{thm:convergence}, we proved that convergence to the optimal policy is guaranteed when we increase $\gamma$ at a certain minimum rate.
These results help clarify the role of discounting in policy gradient methods and provide a solid theoretical foundation for the use of the discounted approximation in conjunction with strategies for increasing $\gamma$ over time.
As the discounted approximation is widely used in practice \citep{nota2019policy}, these results have significant implications for improving the convergence of practical algorithms.

\bibliographystyle{unsrtnat}
\bibliography{main}

\begin{thebibliography}{10}
\providecommand{\natexlab}[1]{#1}
\providecommand{\url}[1]{\texttt{#1}}
\expandafter\ifx\csname urlstyle\endcsname\relax
  \providecommand{\doi}[1]{doi: #1}\else
  \providecommand{\doi}{doi: \begingroup \urlstyle{rm}\Url}\fi

\bibitem[Sutton et~al.(2000)Sutton, McAllester, Singh, and
  Mansour]{sutton2000policy}
Richard~S Sutton, David~A McAllester, Satinder~P Singh, and Yishay Mansour.
\newblock Policy gradient methods for reinforcement learning with function
  approximation.
\newblock In \emph{Advances in Neural Information Processing Systems}, pages
  1057--1063, 2000.

\bibitem[Thomas(2014)]{thomas2014bias}
Philip Thomas.
\newblock Bias in natural actor-critic algorithms.
\newblock In \emph{Proceedings of the 31st International Conference on Machine
  Learning}, pages 441--448, 2014.

\bibitem[Konda and Tsitsiklis(2000)]{konda2000actor}
Vijay~R Konda and John~N Tsitsiklis.
\newblock Actor-critic algorithms.
\newblock In \emph{Advances in Neural Information Processing Systems}, pages
  1008--1014, 2000.

\bibitem[Zhang et~al.(2020)Zhang, Laroche, van Seijen, Whiteson, and
  Combes]{zhang2020deeper}
Shangtong Zhang, Romain Laroche, Harm van Seijen, Shimon Whiteson, and Remi
  Tachet~des Combes.
\newblock A deeper look at discounting mismatch in actor-critic algorithms.
\newblock \emph{arXiv preprint arXiv:2010.01069}, 2020.

\bibitem[Nota and Thomas(2019)]{nota2019policy}
Chris Nota and Philip~S Thomas.
\newblock Is the policy gradient a gradient?
\newblock \emph{arXiv preprint arXiv:1906.07073}, 2019.

\bibitem[Schulman et~al.(2015)Schulman, Moritz, Levine, Jordan, and
  Abbeel]{schulman2015high}
John Schulman, Philipp Moritz, Sergey Levine, Michael Jordan, and Pieter
  Abbeel.
\newblock High-dimensional continuous control using generalized advantage
  estimation.
\newblock \emph{arXiv preprint arXiv:1506.02438}, 2015.

\bibitem[Bertsekas and Tsitsiklis(2000)]{bertsekas2000gradient}
Dimitri~P Bertsekas and John~N Tsitsiklis.
\newblock Gradient convergence in gradient methods with errors.
\newblock \emph{SIAM Journal on Optimization}, 10\penalty0 (3):\penalty0
  627--642, 2000.

\bibitem[Wang et~al.(2019)Wang, Cai, Yang, and Wang]{wang2019neural}
Lingxiao Wang, Qi~Cai, Zhuoran Yang, and Zhaoran Wang.
\newblock Neural policy gradient methods: Global optimality and rates of
  convergence.
\newblock In \emph{International Conference on Learning Representations}, 2019.

\bibitem[Sutton and Barto(2018)]{sutton2018reinforcement}
Richard~S Sutton and Andrew~G Barto.
\newblock \emph{Reinforcement Learning: An Introduction}.
\newblock MIT press, 2018.

\bibitem[Naik et~al.(2019)Naik, Shariff, Yasui, Yao, and
  Sutton]{naik2019discounted}
Abhishek Naik, Roshan Shariff, Niko Yasui, Hengshuai Yao, and Richard~S Sutton.
\newblock Discounted reinforcement learning is not an optimization problem.
\newblock \emph{arXiv preprint arXiv:1910.02140}, 2019.

\end{thebibliography}
\end{document}